\documentclass[letterpaper, 10 pt, conference]{ieeeconf}
\IEEEoverridecommandlockouts
\overrideIEEEmargins

\usepackage{times}
\usepackage{multicol}
\usepackage{multirow}
\usepackage[bookmarks=true]{hyperref}
\usepackage{amsmath, amssymb, mathrsfs,MnSymbol}

\usepackage[ruled,vlined,linesnumbered]{algorithm2e}
\newtheorem{problem}{Problem}
\newtheorem{example}{Example}

\newtheorem{theorem}{Theorem}[section]
\newtheorem{lemma}[theorem]{Lemma}

\usepackage{graphicx} 
\usepackage[colorinlistoftodos,prependcaption,textsize=tiny]{todonotes}
\usepackage{tikz}
\usetikzlibrary{positioning,automata}
\usepackage{clock}
\usepackage[clock]{ifsym}
\usepackage{xcolor}

\usepackage{comment}

\setlength{\textfloatsep}{2pt}

\makeatletter
\newcommand\footnoteref[1]{\protected@xdef\@thefnmark{\ref{#1}}\@footnotemark}
\makeatother
\allowdisplaybreaks

\begin{document}

\title{\LARGE \bf Robust MITL planning under uncertain navigation times}

\author{Alexis Linard*, Anna Gautier*, Daniel Duberg, Jana Tumova
\thanks{*contributed equally. The authors are with the KTH Royal Institute of Technology, SE-100 44, Stockholm, Sweden, with the Division of Robotics, Perception and Learning. This work was supported by the Wallenberg AI, Autonomous Systems and Software Program (WASP) funded by the Knut and Alice Wallenberg Foundation, Digital Futures, and carried out as part of the Vinnova Competence Center for Trustworthy Edge Computing Systems and Applications at KTH Royal Institute of Technology. 
{\tt\small \{linard,annagau,dduberg,tumova\}@kth.se}}%
}

\maketitle

\begin{abstract}
In environments like offices, the duration of a robot's navigation between two locations may vary over time. For instance, reaching a kitchen may take more time during lunchtime since the corridors are crowded with people heading the same way. In this work, we address the problem of routing in such environments with tasks expressed in Metric Interval Temporal Logic (MITL) -- a rich robot task specification language that allows us to capture explicit time requirements. 
Our objective is to find a strategy that maximizes the temporal robustness of the robot's MITL task. 
As the first step towards a solution, we define a Mixed-integer linear programming approach to solving the task planning problem over a Varying Weighted Transition System, where navigation durations are deterministic but vary depending on the time of day. Then, we apply this planner to optimize for MITL temporal robustness in Markov Decision Processes, where the navigation durations between physical locations are uncertain, but the time-dependent distribution over possible delays is known. Finally, we develop a receding horizon planner for Markov Decision Processes that preserves guarantees over MITL temporal robustness. We show the scalability of our planning algorithms in simulations of robotic tasks.
\end{abstract}

\begin{keywords}
Formal Methods, Planning Under Uncertainty, Temporal Robustness, Markov Decision Processes.
\end{keywords}

\IEEEpeerreviewmaketitle

\section{Introduction}
\label{sec:intro}

We consider a scenario where a robot in an office-like environment receives various tasks over time (such as repeatedly visiting offices A, B and C three times every day, while recharging at least every 2 hours) and needs to be routed to complete them. 
We express tasks in Metric Interval Temporal Logic (MITL) and associate them with a priority. MITL is an extension of Linear Temporal Logic (LTL) where time intervals equip the temporal operators \cite{koymans1990specifying}, and where the upper bound of the time intervals expresses the deadlines of the tasks. Both LTL and its timed version have been recently popular choices of robot task and motion specification language due to their rigorousness and richness \cite{fainekos_2005,kressgazit_2009,lahijanian_2012,raman_2014, vasile2017minimum, liang2022fair, cardona2022planning}.
Since MITL tasks include time constraints, the time it takes for the robot to navigate between locations will impact whether they are satisfied or not. 
\emph{Temporal robustness} -- a quantitative semantics originally defined for Signal Temporal Logic -- measures the degree to which a strategy satisfies a specification when subjected to time shifts \cite{rodionova2022temporal}. In other words, it indicates \textit{how much delay a strategy can afford} while still satisfying the desired tasks.

In this paper, we consider stochastic navigation duration, where the time to move between two physical locations follows a known distribution, which, however, may change over time.
For instance, if a robot has to enter the kitchen during lunchtime, it might take extra time with some positive probability. 
In order to successfully optimize for MITL temporal robustness under time-dependent uncertainty, as a first step, we consider the case where navigation is deterministic, but time-dependent. In this setting, a robot navigates an environment abstracted as a discrete Varying Weighted Transition System (VWTS).
The states of the VWTS represent the physical locations in the environment, while the weighted transitions represent the time it takes to transit between two states. Unlike \cite{kurtz2021more}, these navigation times may vary depending on the time of day. Our goal is to find a \textit{strategy}, i.e., a sequence of states in the VWTS, that satisfies MITL tasks with the least possible delays while accounting for known dynamic travel times between states. 

To address the strategy synthesis for stochastically varying navigation times, we model the problem as a Markov Decision Process (MDP), where the transition function incorporates the uncertainty. The goal is to synthesize a strategy that maximizes the tasks' expected temporal robustness. Towards this, we design a reward function that optimizes for temporal robustness. Because solving the MDP exactly can be costly, we develop a receding horizon planner for MDPs. Our receding horizon planner uses the timed VWTS model as a worst-case lookahead, thus preserving guarantees over MITL temporal robustness.

Related work includes \cite{tumova2016least} and \cite{kurtz2021more}, which, however, makes assumptions on static navigation duration.
In \cite{tumova2016least}, a vehicle's trajectory must meet all the temporal demands within their respective deadlines. In contrast to our work, navigation times were time-independent and deterministic, and demands were limited to the syntactically co-safe fragment of LTL. \cite{kurtz2021more} proposed a scalable mixed-integer encoding for a WTS with finite, deterministic, time-independent weights, subject to MITL specifications. There, the starting times and deadlines of the tasks were directly encoded in the intervals of the temporal operators.
Finally, \cite{rodionova2022temporal} defines temporal robustness for Signal Temporal Logic specifications, and requires minor adjustments to handle MITL.

Our contributions are -- (A) A Mixed Integer Linear Program encoding of a VWTS with time-varying weights, thereby \emph{integrating temporal task robustness expressed in MITL} in the optimization function. (B)~A method for \emph{optimizing MITL robustness in MDPs}, accommodating scenarios where access to a specific spatial location at a particular time may follow a probabilistic distribution rather than a fixed time value. (C) An \emph{evaluation on the scalability of our planning algorithms} in simulations of robotic tasks.

\section{Notation}
\label{sec:notations}

Let $\mathbb{R}$, $\mathbb{Z}$ and $\mathbb{N}$ be the set of real, integer and natural numbers including zero, respectively. 
We use a discrete notion of time throughout this paper, and time intervals are represented by $[a,b] \subset \mathbb{N}$, $a, b \in \mathbb{N}, a < b$. 

The syntax of MITL is defined as \cite{koymans1990specifying}:
\begin{equation*}\label{eq:mtl_syntax}
   \phi := \top \mid \pi \mid \lnot \phi \mid \phi_1 \land \phi_2 \mid \phi_1 \mathcal{U}_{[a,b]} \phi_2 
\end{equation*}
where $\pi \in \Pi$ is an atomic proposition; $\neg$ and $\wedge$ are the Boolean operators for negation and conjunction, respectively; and $\mathcal{U}_{[a,b]}$ is the temporal operator {\em until} over bounded interval $[a,b]$.
Other Boolean operations are defined using the conjunction and negation operators to enable the full expression of propositional logic.
Additional temporal operators {\em eventually} and {\em always} are defined as $\diamondsuit_{[a,b]} \phi \equiv \top \mathcal{U}_{[a,b]} \phi$ and $\Box_{[a,b]} \phi \equiv \neg \diamondsuit_{[a,b]} \neg \phi$,  respectively. 

We define MITL semantics over \textit{timed words} $\mathbf{\sigma} = (\xi_0,t_0),(\xi_1,t_1),(\xi_2,t_2),\dots$, $\sigma_{t_i} = (\xi_i,t_i)$, where $\xi_i \in 2^{\Pi}$ is the set of atomic propositions that hold at position $i \in \mathbb{N}$ and throughout time interval $[t_i, t_{i+1}]$, $t_i \in \mathbb{N}, \forall i\geq 0$. We denote that $\mathbf{\sigma}$ satisfies the MITL formula $\phi$ with $\mathbf{\sigma} \models \phi$, and that $\sigma_i,\sigma_{i+1},\dots$ satisfies $\phi$ with $\mathbf{\sigma}_i \models \phi$. 
We refer to the set of timed words as $\Sigma$.

The characteristic function $\chi_\phi(\sigma) : \Sigma \rightarrow \{\pm 1\}$ indicates if a formula is satisfied \cite{donze_robust_stl}.
\begin{equation*}
    \label{eq:characteristic_func}
    \begin{aligned}
        \chi_{\pi}(\sigma_{t_i}) & = \begin{cases} 1 & \textrm{if}\ \pi \in \xi_i \\ -1 & \textrm{otherwise} \end{cases}
        \\
        \chi_{\neg \phi}(\sigma_{t_i}) & = -\chi_\phi(\sigma_{t_i})
        \\
        \chi_{\phi_1 \land \phi_2}(\sigma_{t_i}) & = \chi_{\phi_1}(\sigma_{t_i}) \ \wedge\ \chi_{\phi_2}(\sigma_{t_i})
        \\
        \chi_{\Box_{[a,b]}\phi}(\sigma_{t_i}) & = \bigwedge_{t'\in {t_i}+[a,b]} \chi_{\phi}(\sigma_{t'})
        \\
        \chi_{\diamondsuit_{[a,b]}\phi}(\sigma_{t_i}) & = \bigvee_{t'\in {t_i}+[a,b]} \chi_{\phi}(\sigma_{t'})
        \\
        \chi_{\phi_1 \mathcal{U}_{[a,b]} \phi_2}(\sigma_{t_i}) & = 
        \bigvee_{t'\in {t_i}+[a,b]} \left( \chi_{\phi_2}(\sigma_{t'}) \ \wedge \ \bigwedge_{t'' \in [{t_i},t']} \chi_{\phi_1}(\sigma_{t''}) \right)
        \\
        \chi_{\phi}(\sigma) & = \chi_{\phi}(\sigma_{t_0})
    \end{aligned}
\end{equation*}

Adapting \cite{rodionova2022temporal}, we define the (synchronous) temporal robustness $\eta : \Sigma \rightarrow \mathbb{Z}$ of an MITL formula $\phi$ on a word $\sigma$.
\begin{equation*}
    \label{eq:time_robustness}
    \begin{aligned}
    \eta_\phi^-(\sigma_{t_i}) & = \chi_{\phi}(\sigma_{t_i}) \cdot \sup\{\tau \geq 0 : \forall t' \in [{t_i}-\tau,{t_i}], \\ & \chi_{\phi}(\sigma_{t_i}) = \chi_{\phi}(\sigma_{t'})\} 
    \\
    \eta_\phi^+(\sigma_{t_i}) & = \chi_{\phi}(\sigma_{t_i}) \cdot \sup\{\tau \geq 0 : \forall t' \in [{t_i},{t_i}+\tau], \\ & \chi_{\phi}(\sigma_{t_i}) = \chi_{\phi}(\sigma_{t'})\} 
    \\
    \eta_\phi^{\pm}(\sigma_{t_i}) & = \chi_{\phi}(\sigma_{t_i}) \cdot \sup\{\tau \geq 0 : \forall t' \in {t_i} \pm [0,\tau], \\ &  \chi_{\phi}(\sigma_{t_i}) = \chi_{\phi}(\sigma_{t'})\} 
    \end{aligned}
\end{equation*}
Intuitively, temporal robustness measures how well the satisfaction of a formula $\phi$ holds with respect to time shifts. It quantifies the maximal amount of time that we can shift the characteristic function $\chi_{\phi}(\sigma_{t_i})$ of $\phi$ to the left ($\eta_\phi^-(\sigma_{t_i})$) or right ($\eta_\phi^+(\sigma_{t_i})$) without changing the value of $\chi_{\phi}(\sigma_{t_i})$. 
The combined notation $\eta_\phi^{\pm}(\sigma_{t_i})$ quantifies the maximal amount of time by which we can shift $\chi_{\phi}(\sigma_{t_i})$ to the left and to the right, without changing the value of $\chi_{\phi}(\sigma_{t_i})$.

\section{Strategy Synthesis under Deterministic Navigation Times}
\label{sec:problem-deterministic}

We consider strategy synthesis for finite-state labelled Weighted Transition Systems (WTS) as a step towards handling uncertainties in navigation times. 
A WTS $(S, s_0, \delta, \Pi, L, C)$ is a tuple where $S$ is a finite set of states, $s_0 \in S$ is an initial state, $\delta \subseteq S \times S$ are transition relations, $\Pi$ is a finite set of atomic predicates, $L : S \to 2^{\Pi}$ is a labeling function associating states to atomic predicates and $C$ is a weight function assigning weights to each of the transitions. The weight $C(s_1, s_2)$ represents the number of time steps it takes for a robot to navigate from $s_1$ to $s_2$.
Further, we define the set of adjacent states of state $s \in S$ as $Adj(s) = \{ s' \in S \mid (s,s') \in \delta \}$.

For example, WTS can represent an abstraction of an office-like environment. The set of states $S$ represent waypoints in the environment (e.g., a particular location in an office, entrance to the office, etc.); $\delta$ represents the connection between locations navigable by the robot; $L$ is the set of labels of the office-like environment (e.g., ``\textit{kitchen}'', ``\textit{lab}'', ``\textit{office}'') and $C$ the nominal number of time steps it takes for the mobile robot to navigate between waypoints.

We now generalize the classic definition of a WTS and introduce the Varying Weighted Transition Systems (VWTS). A VWTS is defined by $\mathcal{A}=(S, s_0, \delta, \Pi, L, \Delta)$, where $\Delta : (S \times \mathbb{N} \times S) \to \mathbb{N}^+$ is a \textit{time-dependent weight function}. The weight $\Delta(s_1, t,  s_2)$ represents the \textit{number of time steps} it takes for a robot to navigate from $s_1$ to $s_2$ at timestep $t$.
In Fig. \ref{fig:example_wts}, we show an example of a VWTS. 
In this work, we want to find a \textit{strategy}, that is, the sequence of states of a VWTS $\mathcal{A}$ that satisfies a given MITL specification. Further, if no satisfying solution exists, we want to find a solution that minimizes the \textit{delay} or the \textit{advancement} of given tasks in $\phi$. In the following, we reuse some notations from \cite{kurtz2021more}.

\begin{figure}[t] 
    \centering
    \hspace{-1cm}
    \begin{minipage}[b]{0.45\linewidth}
        
        \includegraphics[width=0.95\linewidth,angle=-90]{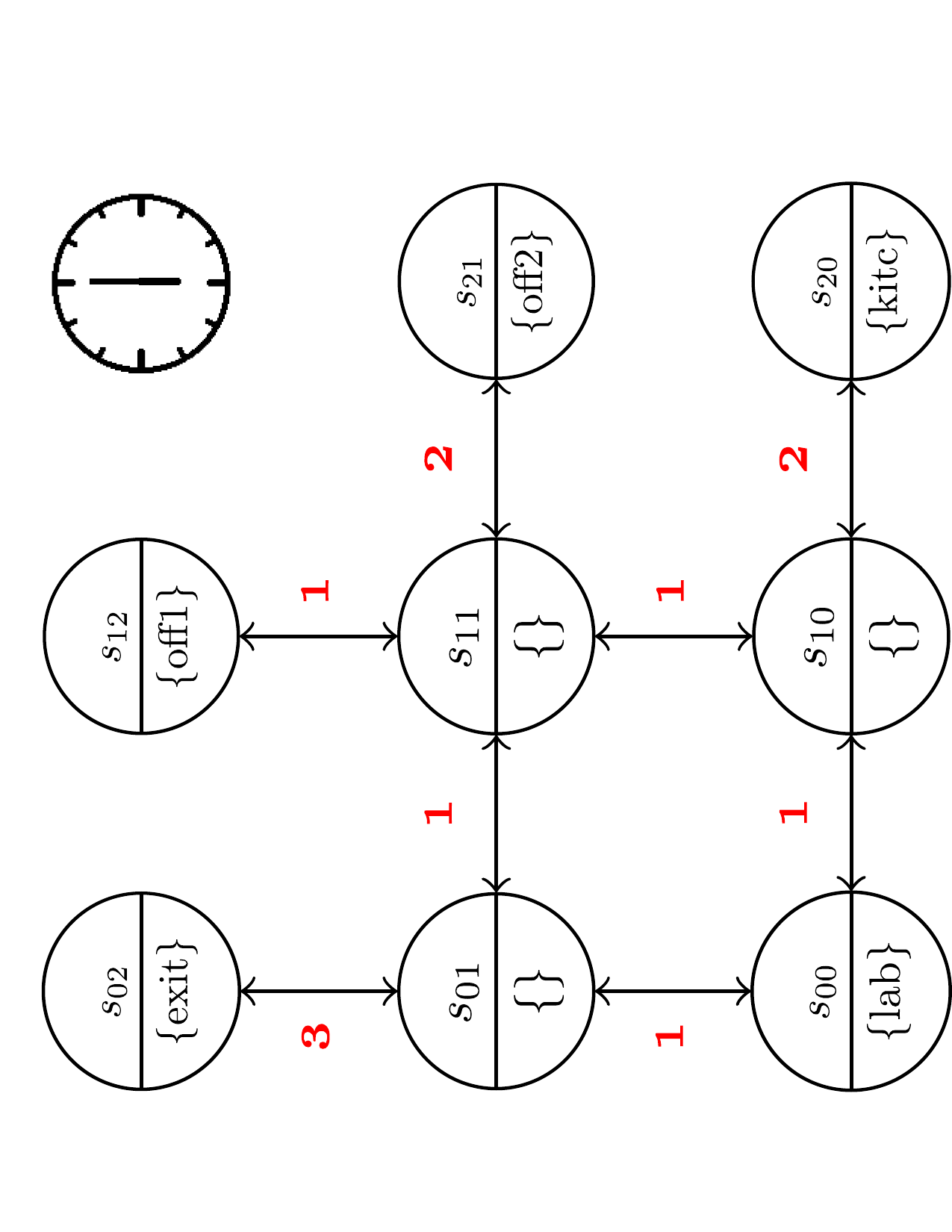}
    \end{minipage}
    \hspace{0.5cm}
    \begin{minipage}[b]{0.45\linewidth}
        \includegraphics[width=0.95\linewidth,angle=-90]{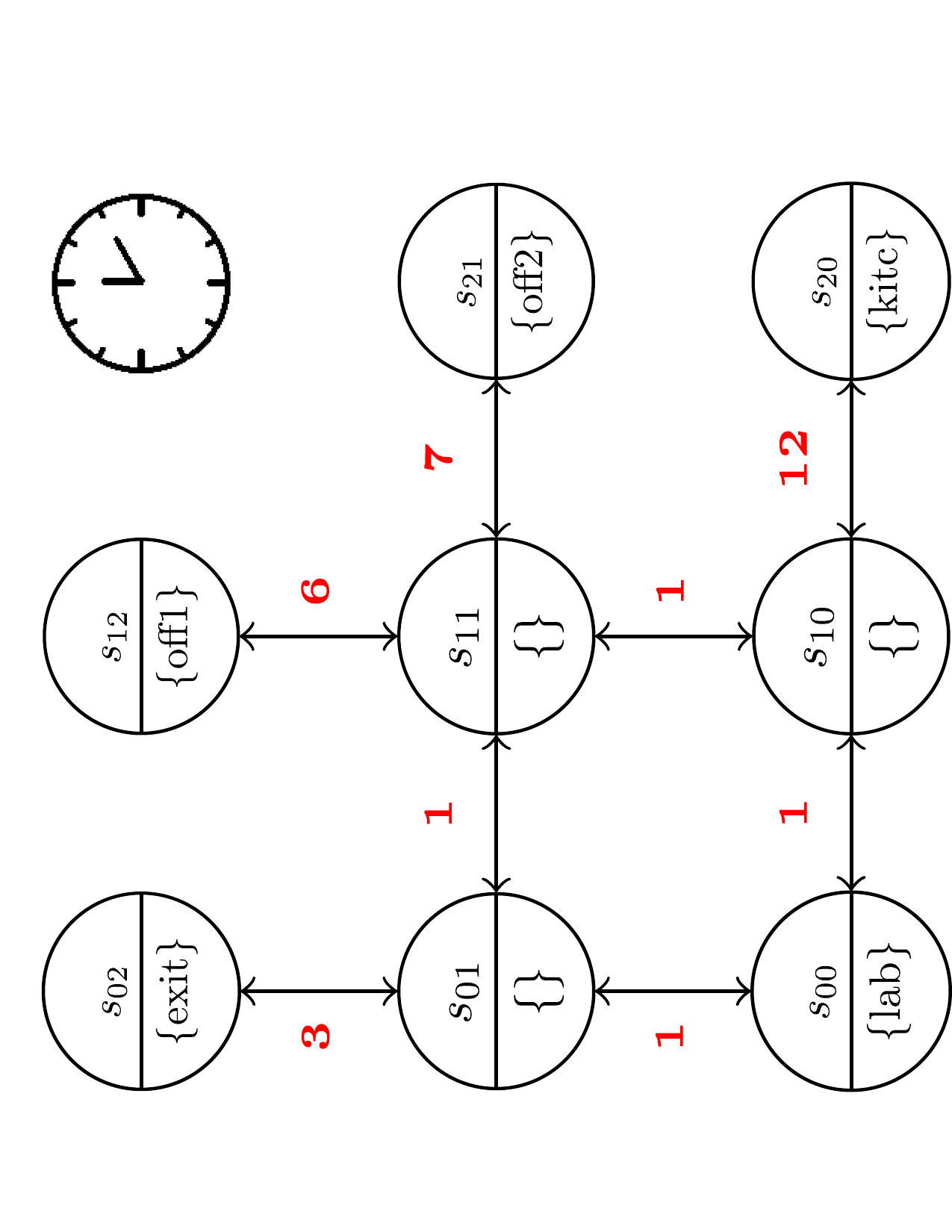}
    \end{minipage}
  \caption{VWTS  $\mathcal{A} = (S, s_0, \delta, \Pi, L, \Delta)$ shows a simplified office-like environment. States are labelled by $L=\{``\textit{exit}",``\textit{off1}",``\textit{off2}",``\textit{lab}",``\textit{kitc}"\}$. Self-transitions are not shown but are assumed to have a weight of 1, and transition weights are bidirectional. Transition weights are shown along the edges, and change based on time.}
  \label{fig:example_wts} 
\end{figure}

A sequence of states $\mathbf{s} = s_0,s_1,\dots,s_n$ is a \textit{path} of the VWTS $\mathcal{A}$ if $(s_i,s_{i+1}) \in \delta \ \forall i \in [0,n-1]$.
Each \textit{path} $\mathbf{s}$ is associated with a \textit{time sequence} $\mathbb{T}(\mathbf{s}) = t_0, t_1,\dots,t_n$ where $t_0 = 0$, $t_i = t_{i-1} + \Delta(s_{i-1},t_{i-1},s_i)$ for $i \geq 1$, and $t_n \leq T$, where $T$ is the planning horizon. The time $t_i$ denotes the sum of the weights of the transitions executed, that is, the time elapsed until reaching the $i$-th state in the \textit{path} $\mathbf{s}$.
Further, we also associate to any path $\mathbf{s}$ a \textit{timed word} $\sigma(\mathbf{s}) = (L(s_0),t_0),(L(s_1),t_1),(L(s_2),t_2),\dots$.
In other words, $\sigma(\mathbf{s})$ is the corresponding timed sequence of atomic propositions of path $\mathbf{s}$. This \textit{timed word} is then considered to evaluate the satisfaction of an MITL formula. 

\begin{example}
\label{example:wts}
Consider the right VWTS in Fig. \ref{fig:example_wts}. The path $\mathbf{s} = s_{02},s_{01},s_{00},s_{10},s_{11},s_{12}\ldots$ yields the time sequence $\mathbb{T}(\mathbf{s}) = 0, 3, 4, 5, 6, 12\ldots$, which results in the timed word $\sigma(\mathbf{s}) = (\{``\textit{exit}"\},0),(\emptyset,3),(\{``\textit{lab}"\},4),(\emptyset,5),(\emptyset,6),(\{``\textit{off1}"\},$ $12)\ldots$ where $\{``\textit{exit}"\}$ is the label of $s_{02}$ at $t_0=0$, $t_1=3$, \ldots.  
Note that until the next state is reached, the label of the previous state still holds. 
Let us look at the specification $\Box_{1,2}``\textit{exit}"$. 
Since at times 1 and 2 the agent is transiting between states $s_{02}$ and $s_{01}$, under our definition, the label of $s_{02}$ holds until $s_{01}$ is reached. Hence, $\Box_{1,2}``\textit{exit}"$ holds. \end{example}

Now, consider that a robot navigating an arbitrary VWTS $\mathcal{A}$ is given a set of tasks $D = \{(\phi_1,p_1),(\phi_2,p_2),\dots\}$, where $\phi_i$ is an arbitrary MITL formula and $p_i$ its assigned priority. Priority controls the emphasis given to differing, and potentially conflicting, goals.  We, therefore, define our problem as follows:

\begin{problem}
\label{prob:main_problem_WTS}
Given a VWTS $\mathcal{A}$, a set of tasks $D$ expressed in MITL, a planning horizon $T > ||D||$, where $||D||$ is the tasks' maximum horizon, we would like to find a strategy, i.e., path, that maximizes the total sum of the tasks' temporal robustness, weighted by their respective priorities, i.e.,
   \begin{subequations}\label{eq:main_problem}
    \begin{align}
        \max_{\mathbf{s}} ~& \sum_{(\phi_i,p_i) \in D} \eta_{\phi_i}(\sigma(\mathbf{s})) \cdot p_i \label{eq:main_problem_cost}\\
        \mathrm{s.t.~} & \text{Transition System Constraints} \label{eq:main_ts_constraint} \\
        & \text{MITL Constraints} \label{eq:main_mtl_constraint} 
    \end{align}
    \end{subequations}
\end{problem}

We omit specifying what specific temporal robustness ($\eta_\phi^+$, $\eta_\phi^-$ or $\eta_\phi^{\pm}$) to use in the optimization function. Depending on the application, one might prefer optimizing one or the other. More generally, the optimization function could be any linear combination of functions.

\subsection{Approach}

\subsubsection{Encoding of the VWTS \eqref{eq:main_ts_constraint}}

We start by introducing a binary variable $b_s^t$ for each state $s \in S$ and each timestep $t \in [0,T]$, where $T$ is the planning horizon. 
If $b_s^t = 1$, then the optimal path visits state $s$ at time $t$.
The timed word resulting from the optimal path is then retrieved by $\sigma(\mathbf{s})=(L(s),t)\ \forall b_s^t$ s.t $b_s^t = 1$.
We define the constraints encoding the VWTS $\mathcal{A} = (S, s_0, \delta, \Pi, L, \Delta)$:
\begin{subequations}\label{eq:transition_system_constraints}
\begin{gather}
    b_{s_0}^0 = 1, \label{eq:standard_initial_constraint} \\
     0 \leq\sum_{s \in S} b_s^t \leq 1 \quad \forall t \in [0,T], \label{eq:standard_occupation_constraint}\\
    \sum_{s' \in Adj(s)} b_{s'}^{t+\Delta(s,t,s')} \geq b_s^t \quad \forall s \in S, \forall t \in [0,T],\label{eq:standard_transition_constraint} \\
    \sum_{s' \in Adj(s)} b_{s'}^{t+\Delta(s,t,s')} \leq 1 \ \quad \forall s \in S, \forall t \in [0,T]\label{eq:standard_transition_constraint2}.
\end{gather}
\end{subequations}
Constraint \eqref{eq:standard_initial_constraint} establishes $s_0$ at $t=0$; \eqref{eq:standard_occupation_constraint} ensures that 0 or 1 state can be occupied at each timestep; and  \eqref{eq:standard_transition_constraint} and \eqref{eq:standard_transition_constraint2} enforce transition relations. 
While \eqref{eq:standard_initial_constraint} follows the encoding presented in \cite{kurtz2021more}, note that \eqref{eq:standard_occupation_constraint} differs to account for travel times that are different from 1, and that 0 or 1 state can be occupied at each timestep. Further, \eqref{eq:standard_transition_constraint} and \eqref{eq:standard_transition_constraint2} differ from \cite{kurtz2021more} in that they establish transition relations with dynamic time weights over time.

\subsubsection{Encoding of MITL constraints \eqref{eq:main_mtl_constraint}}

We recursively define variables and constraints along the MITL formula's structure. We define new binary variables, $z_\varphi^t$, such that $z_\varphi^t = 1$ if and only if $\varphi$ is satisfied starting from time $t$. 
Conjunction and disjunction are encoded as follows \cite{kurtz2021more}:
\begin{gather}
    z \vDash \bigwedge_{i=1}^n z_i \iff z \leq z_i ~\forall i \text{ and } z \geq 1 - n + \sum_{i=1}^n z_i,  \label{eq:conj} \\
    z \vDash \bigvee_{i=1}^n z_i \iff z \leq \sum_{i=1}^n z_i \text{ and } z \geq z_i ~\forall i. \label{eq:disj}
\end{gather}
We use \eqref{eq:conj} and \eqref{eq:disj} to encode an MITL formula $\varphi$:
\begin{subequations}\label{eq:mtl_constraints}
\begin{alignat}{2}
    & \pi                       && \implies z_\pi = \sum_{s \in S \mid L(s) = \pi} b_s^{\tilde{q}^t}, \label{eq:modified_AP_encoding} \\
    & \lnot \varphi             && \implies z_{\lnot \varphi} = 1 - z_{\varphi}, \\
    & \varphi_1 \land \varphi_2 && \implies z_{\varphi_1 \land \varphi_2} = z_{\varphi_1} \land z_{\varphi_2}, \\
    & \varphi_1 \lor \varphi_2 && \implies z_{\varphi_1 \lor \varphi_2} = z_{\varphi_1} \lor z_{\varphi_2}, \\
    & \Box_{[a,b]}\varphi && \implies \bigwedge_{t' \in [t+a,t+b]} z_{\varphi}^{t'}, \\
    & \diamondsuit_{[a,b]}\varphi && \implies \bigvee_{t' \in [t+a,t+b]} z_{\varphi}^{t'}, \\ 
    & \varphi_1\mathcal{U}_{[a,b]}\varphi_2 && \implies \\
    & &&\bigvee_{t' \in [t+a,t+b]} \left(z_{\varphi_2}^{t'} \land \bigwedge_{t'' \in [t,t'-1]} z_{\varphi_1}^{t''} \right). \nonumber
\end{alignat}
\end{subequations}
where $\tilde{q}^t$ is an integer variable referring to the last time step where $b_s^{t'} = 1$ holds :
\begin{subequations}
\begin{gather}
    q^t = (q^{t-1}+1) \cdot (1 - \sum_{s \in S} b_s^t) \ \forall t \in [0,T] \label{eq:qt} \\
    \tilde{q}^t = t - q^t \label{eq:qt_tilde} 
\end{gather}
\end{subequations}
where $q^t$ is a counter variable counting for how many time steps $b_s^{t'} = 0$ holds from time $t$.
Note that the definition of \eqref{eq:modified_AP_encoding} and \eqref{eq:qt} enable the evaluation of atomic predicates when no observation can be made while transiting between 2 states, as remarked in Example \ref{example:wts}. 

\subsubsection{Encoding of MITL temporal robustness \eqref{eq:main_problem_cost}}

We follow the encoding of constraints for the left and right temporal robustnesses $\eta_\phi^+$ and $\eta_\phi^-$ presented in \cite{rodionova2022temporal}.
A \textit{positive} value of $\eta_\phi^+$ stands for how many time units $\sigma(\mathbf{s})$ can be advanced (or shifted to the left) while maintaining the satisfaction of $\phi$. Conversely, a \textit{negative} value of $\eta_\phi^+$ stands for how many time units $\sigma(\mathbf{s})$ should be advanced (or shifted to the left) to obtain the satisfaction of $\phi$, in other words, a \textit{negative} value of $\eta_\phi^+$ stands for the delay in which $\phi$ is satisfied.
Concerning the right temporal robustness, a \textit{positive} value of $\eta_\phi^-$ stands for how many time units $\sigma(\mathbf{s})$ can be postponed (or shifted to the right) while maintaining the satisfaction of $\phi$. In other words, a \textit{positive} value of $\eta_\phi^-$ stands for how much time one can afford to postpone completion of $\phi$.

Since the encoding of the left temporal robustness has been explicitly defined in \cite{rodionova2022temporal}, we solely define the encoding of the right temporal robustness in the following, and refer the reader to \cite{rodionova2022temporal} or our implementation for the left temporal robustness.
If $z_\varphi^t=1$, we count the maximum number of sequential time points $t'<t$ in the past for which $z^{t'}_\varphi=1$. If $z^t_\varphi=0$, we then want to count the maximum number of sequential time points $t'<t$ in the past for which $z^{t'}_\varphi=0$, and then multiply this number with $-1$. To encode the right temporal robustness, we adapt the encoding proposed in \cite{rodionova2022temporal} for the left temporal robustness, and introduce counter variables $c'^t_{1,\varphi}$, $c'^t_{0,\varphi}$, $\tilde{c}'^t_{1,\varphi}$ and $\tilde{c}'^t_{0,\varphi}$, for all $t \in [-T',0]$, where $T'$ is a user-defined constant standing for the maximum right temporal robustness to be calculated.
Note that the encoding of the right temporal robustness also requires defining fictive states $b_{s}^t$ in negative time steps ranging from $[-T',0]$, with:
\begin{subequations}\label{eq:extra_right_time_ro_constraints}
\begin{gather}
    b_{s}^t = 0 \quad \forall s \in S, \forall t \in [-T',0],\label{eq:extra_right_time_ro_constraints1} \\
    z_{\phi_i}^t \quad \forall (\phi_i,p_i) \in D, \forall t \in [0,T]\label{eq:extra_right_time_ro_constraints2}.
\end{gather}
\end{subequations}
Constraints for the right temporal robustness are defined: 
\begin{subequations}\label{eq:eta_minus}
\begin{align}
	\label{eq:milp_c1_minus}
	& c'^t_{1,\varphi} \iff (c'^{t-1}_{1,\varphi} +1) \cdot z_\varphi^t, \qquad\ \,\qquad c'^{-T'-1}_{1,\varphi} = 0, \\
	\label{eq:milp_c0_minus}
	& c'^t_{0,\varphi}  \iff (c'^{t-1}_{0,\varphi} -1) \cdot (1-z_\varphi^t), \qquad c'^{-T'-1}_{0,\varphi} = 0, \\
    \label{eq:milp_ctp1_minus}
    & \tilde{c}'^t_{1,\varphi} \iff c'^t_{1,\varphi} - z_\varphi^t,\\
    \label{eq:milp_ctp0_minus}
    & \tilde{c}'^t_{0,\varphi} \iff c'^t_{0,\varphi} + (1-z_\varphi^t), \\
	\label{eq:milp_eta_final_minus}
    &\eta_\phi^-(\sigma(\mathbf{s})) \iff \tilde{c}'^0_{1,\varphi} + \tilde{c}'^0_{0,\varphi}.
 \end{align}
\end{subequations}

\begin{figure}[t] 
	\centering
	\includegraphics[width=0.95\linewidth]{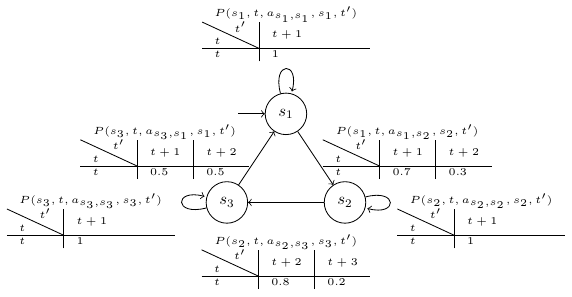}
	\caption{MDP $\mathcal{M}=(S, A, P, T, \Pi, L)$ is simplified example with 3 states.  Probabilistic transitions are shown by the tables next to each edge. Here, transitions are time-independent, but our model allows for time-dependent transitions as well. }
	\label{fig:figure_mdpsl}
\end{figure}

\section{Strategy Synthesis under Uncertain Navigation Times}
\label{sec:problem-mdp}
 
Consider now that disturbances both vary over time and follow some (known) stochastic distribution. For instance, in an office-like environment, access to the kitchen between noon and 12:30 p.m. might take 10 minutes half of the time, and 20 minutes the other half of the time. To represent this type of uncertainty, we now consider strategy synthesis over finite-state, labeled MDPs  \cite{puterman2014markov}. We define a labeled MDP by $\mathcal{M}=(S, A, P, T, \Pi, L)$. As in the VWTS, $S$ is a finite set of states, and $s_0$ is an initial state. Then, $A$ is the set of actions. In this setting, actions describe either navigation from one location to another or waiting. Thus, there is one action per edge in the VWTS, all of which are represented by $A=\{a_{{s_1},{s_2}}| \forall (s_1, s_2)\in \delta \}$. The transition function $P$ encodes the stochastic delays. Formally, we define $P:S\times [0,T] \times A \times S \times [0,T]\rightarrow [0,1]$
where $P(s, t, a, s', t')=p$ is the probability that, if we execute action $a$ at time $t$ in state $s$, we will arrive in state $s'$ at time $t'$. Finally, as in the VWTS,  $T$ is the finite time horizon, $\Pi$ is the set of atomic predicates, and L is the labeling function. Fig. \ref{fig:figure_mdpsl} displays an example of a three-state MDP. 
A \textit{stochastic strategy }in the MDP is defined by $\mu:S\times[0,T]\times A\rightarrow [0,1]$. Then, $\mu(s,t,a)=p$ means that when the robot is at state $s$ at time $t$, it should take action $a$ with probability $p$.

\subsection{Problem Definition and Approach}

\begin{problem}
Given an MDP $\mathcal{M}$ and a set of tasks $D$ expressed in MITL,  we would like to find a \textit{strategy} $\mu$ that maximizes the \textit{expected} total sum of the tasks' temporal robustnesses, weighted by their respective priorities, i.e.,
\begin{subequations}\label{eq:optimisation_mdp}
	\begin{align}
			\text{max}_\mu E_\mu\left[ \sum_{(\phi_i,p_i) \in D} \eta_{\phi_i}(\sigma(S_\mu))  \cdot p_i\right].
	\end{align}
 
\end{subequations}
\end{problem}

In order to succinctly represent Problem 2 with Mixed-integer linear programming, we need to define an extended MDP $\mathcal{\tilde{M}}=(\tilde{S}, A, \tilde{P}, T, \Pi, \tilde{L})$, which is augmented to include \textit{trace history} inside the state space. Formally, $\tilde{S}=\bigcup_{t\in[0,T]} (S\times [0,T])^t$, and denotes not only the current state, but all previous states the robot has visited. For example, $(s_1,t_1), (s_2,t_2), (s_3,t_3) \in \bigcup_{t\in[0,T]} (S\times [0,T])^t$ means that the robot visited state $s_1$ at time $t_1$, $s_2$ at time $t_2$, and is currently at $s_3$ at time $t_3$. For the initial state $s_0\in S$, $(s_0,0)\in \tilde{S}$ is the initial state in $\tilde{M}$. Then, $\tilde{P}:\tilde{S}, A, \tilde{S}'\rightarrow [0,1]$ is defined with respect to $P$. For $\tilde{s} = (s_1,t_1), (s_2, t_2,), ...,  (s_n, t_n)$, if $P(s_n, t_n, A, s', t')=p$, then $P(\tilde{s}, a, \tilde{s}')=p$  for $\tilde{s}'=(s_1,t_1), (s_2,t_2), ...,  (s_n, t_n), (s', t')$. Finally, $\tilde{L}:\tilde{S}\rightarrow 2^\Pi$ is defined by $\tilde{L}(\tilde{s})=L(s_n)$, for arbitrary $\tilde{s} = (s_1,t_1), (s_2, t_2,), ...,  (s_n, t_n)$. Fig. \ref{fig:figure_mdps_unrolled} displays the reachable states of an MDP with trace history.

\begin{figure}[t] 
	\centering
	\includegraphics[width=0.99\linewidth]{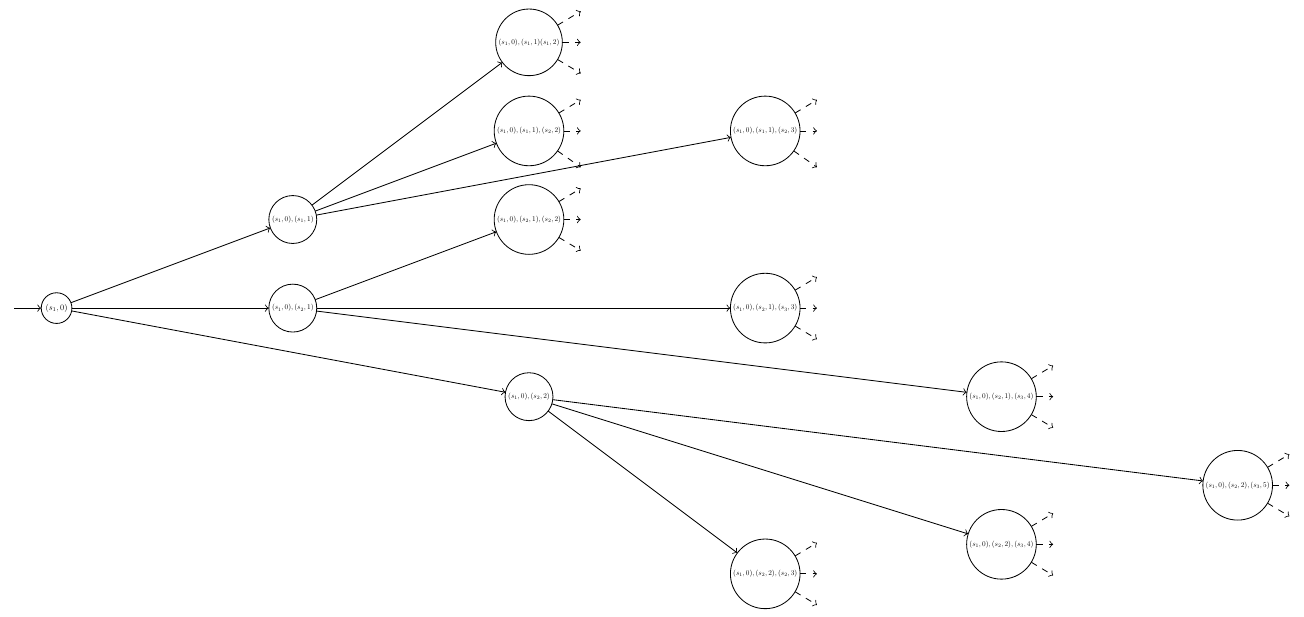}
	\caption{Reachable states of the MDP from Fig. \ref{fig:figure_mdpsl} after three transitions.}
	\label{fig:figure_mdps_unrolled}
\end{figure}

Now, we define a 2-stage Mixed-integer linear programming problem to synthesize optimal strategies with respect to \eqref{eq:optimisation_mdp}. 

\subsubsection{A reward function for MITL temporal robustness}
First, we define the reward function $\tilde{R}:\tilde{S}\rightarrow \mathbb{R}$ of our MDP $\tilde{M}$. Consider an arbitrary state  $\tilde{s}=(s_1,t_1), (s_2,t_2), ...,  (s_m, t_n)\in \tilde{S}$. If $t_n=T$, then $\tilde{R}(\tilde{s})= \sum_{(\phi_i,p_i) \in D} \eta_{\phi_i}(\sigma(\tilde{s}))  \cdot p_i$, where $\sigma$ is defined as in Section~\ref{sec:problem-deterministic}. This reward is explicitly calculated for each possible $\tilde{s}$ such that $t_n= T$ through a MILP, as in \cite{rodionova2022temporal}. For any $\tilde{s}$ such that $t_n\neq T$, $\tilde{R}(\tilde{s})= 0$. This reward structure is designed so that a reward is accumulated only when an agent completes their full execution over the time horizon $T$, and thus, their full trace is available to compute the temporal robustness.

\subsubsection{Solving the MDP}
Next, we synthesize an optimal strategy $\mu$ by solving a Linear Program (LP) as in \cite{puterman2014markov}. In practice, other MDP solvers can be used at this stage.  \cite{puterman2014markov} introduces a continuous variable $0\leq o^{h}_{\tilde{s},a}\leq 1$ for each state $\tilde{s} \in \mathcal{\tilde{S}}$, for each action $a \in A$, and for each planning-step $h \in [0,T]$, where $h$ represents the current number of transitions. Note that this is distinct from the current timestep, which is kept track of in the state $\tilde{s}$.  Each variable $o^{h}_{\tilde{s},a}$ stands for the occupancy measure of the history-dependent MDP state: it represents the probability that, for a given synthesized strategy, the robot occupies state  $\tilde{s}$ at planning step $h$ and takes action $a$. Now, the LP is:

	\begin{subequations}\label{eq:main_problem_mdp}
		\begin{align}
			\max_{\mathbf{\mu}} ~& \sum_{a\in A}\sum_{{\tilde{s}\in \tilde{S}}}o^{T}_{\tilde{s},a}\tilde{R}(\tilde{s}) \label{eq:main_problem_cost_mdp}\\
			\mathrm{s.t.~} & \sum_{a \in A} o^{0}_{(s_0,0),a} = 1 \label{eq:mdp_s0} \\
			\begin{split}
			& \sum_{a \in A} o^{h+1}_{\tilde{s}',a'} =  \sum_{\tilde{s} \in \tilde{S}}  \sum_{a \in A} o^{h}_{\tilde{s},a} \tilde{P}(\tilde{s}, a, \tilde{s}')
			\\ &  \quad \forall \tilde{s}' \in \mathcal{\tilde{S}}, \forall h \in [0,T-1]
		\end{split} \label{eq:mdp_thsa_proba}	
				\end{align}
	\end{subequations}

Constraint \eqref{eq:mdp_s0} ensures that the strategy occupies the initial state at $h=0$ with probability 1. Then, \eqref{eq:mdp_thsa_proba} encodes the transition probabilities.
Once the LP is solved, the optimal strategy can be constructed from the occupancy measures. 

\subsection{Planning with a receding horizon}
\vspace{-0.05cm}
When planning over history-dependent MDPs, the state space of $\tilde{M}$ is exponential in the time horizon $T$. As a result, it may be desirable to plan for a receding horizon $T_r<T$, and re-plan throughout execution. We now present a method of receding horizon planning that preserves guarantees over temporal robustness via a worst-case lookahead. We describe this process for right temporal robustness, but it can be modified to left temporal robustness. 

The receding horizon method we present here exclusively modifies the reward function of our MDP, which is calculated explicitly before solving the MDP. Before describing this modified reward function, we first need to describe how to build a worst-case VWTS ($\bar{W}$) from any MDP. $\bar{W}$ describes the maximum possible delays that could occur if the uncertainty in the MDP was realized in an adversarial manner. For example, if at time $t$, transitioning from $s_1$ to $s_2$ takes time 1 with  probability  0.5  and takes time 2  with  probability  0.5, the worst-case VWTS will take time 2  to transition from  $s_1$ to $s_2$  at time $t$.
Therefore, for a given MDP $\mathcal{M}=(S, A, P, T, \Pi, L)$, we build a worst-case VWTS $\bar{W}=(S, s_0, \delta, \Pi, L, \Delta)$ where $S, s_0, \Pi, L$ remain the same and $\delta$, $\Delta$ are defined as: $(s_1,s_2)\in\delta$ if and only if there exists some $h_1, h_2\in [0,T], a\in A$ such that $P(s_1, h_1, a, s_2, h_2)>0$.
Then for every $(s_1,s_2)\in\delta, h_1\in[0,T]$, we define:
\vspace{-0.15cm}
\begin{align*}
  	\Delta&(s_1, h_1, s_2)\\
  	&=\text{max}\{h_2|  h_2\in [0,T], a\in A, P(s_1, h_1, a, s_2, h_2)>0\}.
\end{align*}
 
We can now redefine the reward function used in  \eqref{eq:main_problem_cost_mdp}. For a given history $\tilde{s}\in \tilde{S}$ for horizon $T_r$, we can extend $\tilde{s}$ to include the optimal trace in the worst-case VWTS. To do this, we define a new WTS,  $\bar{W}_{\tilde{s}}=(S, s_0, \delta, \Pi, L, \Delta_{\tilde{s}})$, where $S, s_0, \delta, \Pi, L$ are the same as in $\bar{W}$, but the weight function~$\Delta$ is modified to ensure that the only feasible trace through the transition system before time $T_r$ is $\tilde{s}$. Formally:
 \begin{enumerate}
 	\item if $h<T_r$, $s_1, h, s_2\in \tilde{s}$, then $\Delta_{\tilde{s}}(s_1, h, s_2)$ is equal to the realized time delay between $s_1$ and $s_2$ in trace $\tilde{s}$, 
  \item if $h<T_r$, $s_1, h, s_2\nin \tilde{s}$, then $\Delta_{\tilde{s}}(s_1, h, s_2)=\infty$, and 
 	\item if $h>=T_r$, $\Delta_{\tilde{s}}(s_1, h, s_2)=\Delta_{\tilde{s}}(s_1, h, s_2).$
 \end{enumerate}
We now use $\bar{W}_{\tilde{s}}$ to define a new, worst-case lookahead reward function by $\bar{R}:\tilde{S}\rightarrow \mathbb{R}$, where $\bar{R}(\tilde{s})$ is equal to the temporal robustness of $\bar{W}_{\tilde{s}}$ as calculated by Section~\ref{sec:problem-deterministic}.

\begin{lemma}
\label{lemma:worst_case}
Solving the LP with $\bar{R}(\tilde{s})$ returns the worst-case temporal robustness that could be achieved after any number of future, receding horizon replannings.
\end{lemma}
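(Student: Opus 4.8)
The plan is to recast receding-horizon execution as a two-player, full-information game between the robot, which re-plans at every checkpoint, and nature, which adversarially realizes the stochastic delays within the support of $P$, and then to show that the value of this game started from history $\tilde{s}$ equals $\bar{R}(\tilde{s})$. I would prove this by backward induction over the finitely many replanning checkpoints between $T_r$ and the terminal horizon $T$, so that the statement about ``any number of future replannings'' is discharged one checkpoint at a time.

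Before the induction I would isolate two structural facts. The first is a \emph{monotonicity} property of the right temporal robustness: if one realized-delay profile dominates another pointwise (every transition takes at least as long), then the optimal $\eta_\phi^-$ attainable under the larger profile is no greater than the one attainable under the smaller profile. Intuitively, smaller navigation times let the robot reach the states that witness satisfaction of $\phi$ no later, so the slack before each deadline — exactly what $\eta_\phi^-$ measures — can only grow. The second is the \emph{over-approximation} property: by construction $\bar{W}$ assigns to each transition the maximum delay that occurs with positive probability under $P$, so every realizable future delay profile consistent with $\tilde{s}$ is pointwise dominated by the profile encoded in $\bar{W}_{\tilde{s}}$. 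Combining the two facts yields the guarantee direction: whatever nature does, the robot can follow the path that is optimal in $\bar{W}_{\tilde{s}}$ (padding with weight-$1$ self-transitions to resynchronize whenever it arrives earlier than the worst-case schedule), and by monotonicity the realized robustness is at least $\bar{R}(\tilde{s})$.

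For tightness I would exhibit nature's strategy that realizes the maximal delay on every transition. Under this realization the robot's true environment coincides exactly with $\bar{W}_{\tilde{s}}$ on the suffix after $T_r$, so no strategy — and in particular no future re-planning — can exceed the optimum $\bar{R}(\tilde{s})$ computed over $\bar{W}_{\tilde{s}}$. The induction then closes: at the final checkpoint the remaining horizon is planned in a single shot and the game value equals $\bar{R}$ by definition (Section~\ref{sec:problem-deterministic}); at an earlier checkpoint with history $\tilde{s}$ the robot's optimal move reaches a successor history $\tilde{s}'$ whose game value is $\bar{R}(\tilde{s}')$ by the inductive hypothesis, where monotonicity gives $\bar{R}(\tilde{s}') \ge \bar{R}(\tilde{s})$ on any branch where nature played below worst case, while equality is forced on the adversarial branch, so the worst case over nature's moves is again $\bar{R}(\tilde{s})$. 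Finally, since $\bar{R}$ is used as the terminal reward of the LP~\eqref{eq:main_problem_mdp}, solving the LP propagates exactly this worst-case value, which proves the claim.

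I expect the monotonicity property to be the main obstacle. It is not immediate for arbitrary MITL formulas, because nesting \emph{always} and \emph{eventually} operators can make the characteristic function respond non-trivially to time shifts; I would establish it by structural induction on $\phi$, showing that each operator in the MITL grammar preserves the pointwise ordering of $\eta_\phi^-$ under delay reduction, and I would treat with care the resynchronization step in which a robot arriving ahead of the worst-case schedule pads its trace with weight-$1$ self-transitions so that the earlier trace still realizes at least the planned robustness.
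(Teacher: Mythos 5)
Your route is genuinely different from the paper's. The paper's proof is a direct expectation computation on the LP: it decomposes the strategy into $\mu_{[0,T_r]}$ and $\mu_{[T_r,T]}$, writes the expected robustness as $\sum_{a}\sum_{\tilde{s}} o^{T}_{\tilde{s},a}\tilde{R}(\tilde{s})$, partitions the $T$-length histories by their $T_r$-length prefix, invokes the inequality $\tilde{R}(\tilde{s}_T)\geq \bar{R}(\tilde{s}_{T_r})$, and collapses the inner sum via the occupancy identity $\sum_{\tilde{s}_T \text{ extending } \tilde{s}_{T_r}} o^{T}_{\tilde{s}_T,a}= o^{T_r}_{\tilde{s}_{T_r},a}$, concluding that the LP value with $\bar{R}$ lower-bounds the achieved expected robustness. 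Your game-theoretic backward induction over checkpoints works pathwise instead of in expectation, and it buys two things the paper does not deliver: an explicit treatment of ``any number of future replannings'' (the paper asserts the extension beyond one replanning without argument), and tightness (nature playing maximal delays makes the environment coincide with $\bar{W}_{\tilde{s}}$, so $\bar{R}$ is exactly, not merely at most, the worst case). You also correctly isolate the inequality $\tilde{R}(\tilde{s}_T)\geq\bar{R}(\tilde{s}_{T_r})$ as the crux, which the paper dispatches in a single clause. The cost is that your final sentence --- ``the LP propagates exactly this worst-case value'' --- waves through the bookkeeping that is the actual computational content of the paper's proof: the LP objective is an \emph{expectation} of $\bar{R}$ over $T_r$-step occupancy measures, so connecting your per-branch game values to the LP value requires precisely the prefix-partition and occupancy-measure identity above; you would need to write that step out.

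There is also a genuine gap in your main lemma. Monotonicity of $\eta_\phi^-$ under pointwise delay reduction, as you state it, is false for full MITL: the grammar is closed under negation, and any delay reduction that increases $\eta_\phi^-$ decreases $\eta_{\neg\phi}^-$, so a structural induction showing that ``each operator preserves the pointwise ordering'' cannot close at the $\neg$ case. Concretely, under the semantics of Example~\ref{example:wts} the previous state's label holds during transit; if the realized delay on a transition $s_1\to s_2$ is shorter than the worst case, the realized timed word stamps $L(s_2)$ \emph{earlier}, so a task such as $\Box_{[a,b]}\neg\,\textit{kitchen}$ that is satisfied under the worst-case schedule can be violated by the early arrival. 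Your self-loop resynchronization does not repair this: waiting at $s_2$ still makes $L(s_2)$ hold from the early arrival time, and the robot cannot wait ``in transit,'' so the realized word cannot replicate the planned worst-case word. Your guarantee direction therefore holds only on a negation-restricted (positive) fragment, or with departure-time buffering built into the strategy. To be fair, the paper's own one-line assertion that $\tilde{R}(\tilde{s}_T)\geq\bar{R}(\tilde{s}_{T_r})$ ``because $\bar{R}$ is a worst-case temporal robustness'' silently relies on the same property; your proposal has the virtue of exposing the assumption, but the induction you sketch would fail at negation rather than discharge it.
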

\begin{proof} (Sketch)
For notational convenience, we show this for one replanning, but the results extend to multiple replannings. 
The final strategy can be described by $\mu=\mu_{[0,T_r]}+\mu_{[T_r, T]}$ where   $\mu_{[0,T_r]}$ describes how to plan until $T_r$ and $\mu_{[T_r, T]}$ describes how to plan after. Strategy $\mu$ will have expected  temporal robustness described by:
\begin{equation}
\sum_{a\in A}\sum_{{\tilde{s}\in \tilde{S}}}o^{T}_{\tilde{s},a}\tilde{R}(\tilde{s}).
\end{equation}
We can partition the $T$-length elements of $\tilde{S}$ into sets that share a prefix of length $T_r$. Let $\tilde{S}_{T}$, $\tilde{S}_{T_r}$ be the set of $T$-length and $T_r$-length prefixes, respectively. Then we can equivalently define expected temporal robustness for $\mu$ as:
\begin{equation}
	\sum_{a\in A}\sum_{{\tilde{s}_{T_r}\in \tilde{S}_{T_r}}}\sum_{\tilde{s}_{T}\in \tilde{S}_{T}\ni\tilde{s}_{T_r}\\\in\tilde{s}_{T} } o^{T}_{\tilde{s}_{T},a}\tilde{R}(\tilde{s}_{T}).
\end{equation}
But, for any $\tilde{s}_{T}\in \tilde{S}_{T}\ni\tilde{s}_{T_r}$, $\tilde{R}(\tilde{s}_{T})\geq \bar{R}(\tilde{s}_{T_r})$, because the prefix for $\tilde{s}_{T}$ is $\tilde{s}_{T_r}$, and $\bar{R}$ is a worst-case temporal robustness for  $\tilde{s}_{T_r}$.  Thus, the expected temporal robustness for strategy $\mu$ is at least:
\begin{align}
\sum_{a\in A}&\sum_{{\tilde{s}_{T_r}\in \tilde{S}_{T_r}}}\sum_{\tilde{s}_{T}\in \tilde{S}_{T}\ni\tilde{s}_{T_r}\in\tilde{s}_{T} }o^{T}_{\tilde{s}_{T},a}\bar{R}(\tilde{s}_{T_r})
\\ & = \sum_{a\in A}\sum_{{\tilde{s}_{T_r}\in \tilde{S}_{T_r}}} \bar{R}(\tilde{s}_{T_r} )\sum_{\tilde{s}_{T}\in \tilde{S}_{T}\ni\tilde{s}_{T_r}\in\tilde{s}_{T} } o^{T}_{\tilde{s}_{T},a}
\end{align}

Finally, we note that $\sum_{\tilde{s}_{T}\in \tilde{S}_{T}\ni\tilde{s}_{T_r}\\\in\tilde{s}_{T} } o^{T}_{\tilde{s}_{T},a}$ is exactly equal to $ o^{T_r}_{\tilde{s}_{T_r},a}$, because $\{\tilde{s}_{T}\in \tilde{S}_{T}\ni\tilde{s}_{T_r}\in\tilde{s}_{T} \}$ contains all possible extension of $\tilde{s}_{T_r}$. Finally, we conclude that the temporal robustness for the final strategy $\mu$ is at least as much as the temporal robustness determined before pre-planning. 
\end{proof}

\section{Experiments}
\label{sec:experiments}
\begin{figure}[t]
    \centering
    \includegraphics[width=0.8\linewidth]{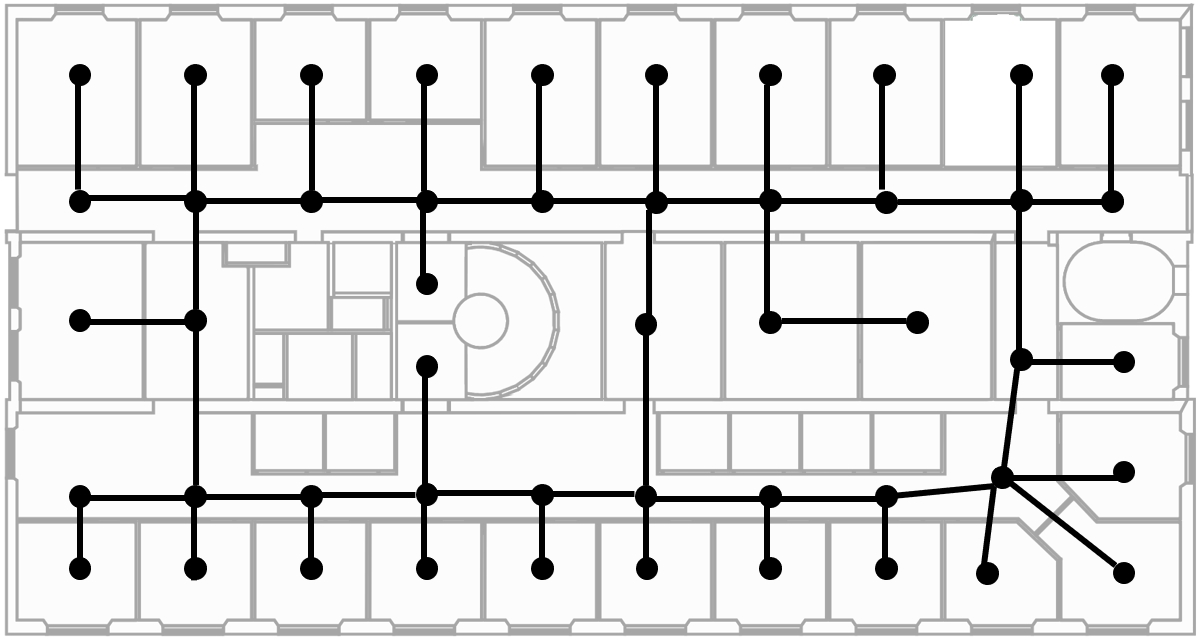}
    \caption{Transition system of an office-like environment. Labels and self-transitions are omitted for readability. Transitions are bi-directional. For more details, see our repository\footref{gh_repo}.}
    \label{fig:lv24}
\end{figure}
We implemented and tested our methods in Python 3.8, using Gurobi \cite{gurobi} for the LP solving\footnote{\label{gh_repo}\href{https://github.com/KTH-RPL-Planiacs/mitl_task_solver_temporal_robustness}{https://github.com/KTH-RPL-Planiacs/mitl\_task\_solver\_temporal\_robustness}}.
We ran simulations on an Intel i7-8665U CPU and 32GB RAM, with a timeout of 30000 seconds.

\subsection{Deterministic Navigation Times} 
We ran experiments varying the number of states, the number of tasks, and the horizon on a VWTS with transitions from the office-like environment described in Fig. \ref{fig:lv24}. Table~\ref{tab:res_simulation_wts} shows that, as the time horizon $T$ and the number of tasks $D$ increase, the time it takes to solve the MILP and yield a strategy increases.  Our results demonstrate the scalability of the \textit{deterministic navigation times} scenario. One could achieve day-long task planning even with time steps of 1 minute. 
One can easily adjust the time resolution and the planning time window.

\begin{table}[t]
\centering
\caption{No. of tasks ($|D|$); No. of states in the VWTS ($|S|$); time horizon ($T$); encoding time of the LP constraints ($t_\textrm{encoding}$); solving time by the solver ($t_\textrm{solving}$); No. of LP variables (\#$\textrm{lpvars}$); and No. of LP constraints (\#$\textrm{lpconst}$).}
\begin{tabular}{c|c|c|c|c|c|c}
\hline
$|S|$ & $|D|$ & $T$ & $t_\textrm{encoding}$ (s) & $t_\textrm{solving}$ (s) & \#$\textrm{lpvars}$ & \#$\textrm{lpconst}$ \\ \hline
\multirow{12}{*}{46} & \multirow{4}{*}{5} & 50 & 0.138 & 0.408 & 4037 & 7543 \\ \cline{3-7} 
 &  & 100 & 0.268 & 6.353 & 6943 & 14559 \\ \cline{3-7} 
 &  & 500 & 1.181 & 29.79 & 25851 & 61903 \\ \cline{3-7} 
 &  & 1000 & 2.049 & 44.03 & 49146 & 104683 \\ \cline{2-7} 
 & \multirow{4}{*}{10} & 50 & 0.204 & 0.267 & 4845 & 9929 \\ \cline{3-7} 
 &  & 100 & 0.354 & 40.08 & 7803 & 18023 \\ \cline{3-7} 
 &  & 500 & 1.453 & 50.00 & 27736 & 73563 \\ \cline{3-7} 
 &  & 1000 & 2.396 & 298.6 & 51381 & 118643 \\ \cline{2-7}
 & \multirow{4}{*}{20} & 50 & 0.579 & 0.619 & 7076 & 27427 \\ \cline{3-7} 
 &  & 100 & 0.973 & 293.2 & 10876 & 47383 \\ \cline{3-7} 
 &  & 500 & 4.153 & 6123 & 33068 & 204683 \\ \cline{3-7} 
 &  & 1000 & 4.870 & 13091 & 56937 & 233051 \\ \hline
 \multirow{12}{*}{92} & \multirow{4}{*}{5} & 50 & 0.254 & 0.404 & 7274 & 12981 \\ \cline{3-7} 
 &  & 100 & 0.479 & 18.23 & 12198 & 23581 \\ \cline{3-7} 
 &  & 500 & 2.127 & 33.88 & 49723 & 102241 \\ \cline{3-7} 
 &  & 1000 & 4.129 & 37.82 & 96063 & 195121 \\ \cline{2-7} 
 & \multirow{4}{*}{10} & 50 & 0.414 & 0.765 & 8082 & 15367 \\ \cline{3-7} 
 &  & 100 & 0.588 & 14.04 & 12993 & 26951 \\ \cline{3-7} 
 &  & 500 & 2.246 & 23.63 & 50188 & 104941 \\ \cline{3-7} 
 &  & 1000 & 4.525 & 91.30 & 98358 & 208229 \\ \cline{2-7}
 & \multirow{4}{*}{20} & 50 & 0.827 & 1.629 & 10313 & 32865 \\ \cline{3-7} 
 &  & 100 & 1.314 & 23723 & 16492 & 52193 \\ \cline{3-7} 
 &  & 500 & 4.019 & $>$30000 & 56361 & 167121 \\ \cline{3-7} 
 &  & 1000 & 6.037 & $>$30000 & 102356 & 259121 \\ \hline
\end{tabular}
\label{tab:res_simulation_wts}
\end{table}

\subsection{Uncertain Navigation Times} 
We ran experiments varying the number of states, the number of tasks, the horizon, and the receding horizon on a uncertain navigation time MDP with transitions from the office-like environment described in Fig. \ref{fig:lv24}. The results in Table~\ref{tab:res_simulation_mdp} show that the time to encode the MDP is significantly more cumbersome than the time to solve the MDP. The encoding time includes explicitly defining the reward function, which relies on solving a series of calls to the VWTS MITL temporal robustness MILP. As the receding horizon $T_r$ increases, the number of calls to that MILP increases exponentially. As the time horizon  $T$ increases and the number of tasks $D$ increases, the time it takes to execute that MILP increases.

\begin{table}[t]
\centering
\caption{No. of tasks ($|D|$); No. of states in the MDP ($|S|$); time horizon ($T$); receding horizon ($T_r$); encoding time of LP constraints ($t_\textrm{encoding}$); solving time by solver ($t_\textrm{solving}$); No. of LP variables (\#$\textrm{lpvars}$); and No. of LP constraints (\#$\textrm{lpconst}$).}
\begin{tabular}{c|c|c|c|c|c|c|c}
\hline
$|S|$ & $|D|$ & $T$ & $T_r$ & $t_\textrm{encoding}$ (s) & $t_\textrm{solving}$ (s) & \#$\textrm{lpvars}$ & \#$\textrm{lpconst}$  \\ \hline
\multirow{8}{*}{46} & \multirow{4}{*}{2} &  \multirow{2}{*}{25} & 5 & 28.86 & 0.054 & 1186 & 161 \\ \cline{4-8}
 & & & 7 & 479.6 & 0.364 & 14596 & 1964 \\ \cline{3-8}
 & & \multirow{2}{*}{50} & 5 & 255.7 & 0.055 & 1186 & 161 \\ \cline{4-8}
 & & & 7 & 2271 & 0.317 & 14596 & 1964 \\ \cline{2-8}
 & \multirow{4}{*}{5} & \multirow{2}{*}{25} & 5 & 36.08 & 0.037 & 1186 & 161 \\ \cline{4-8}
 & & & 7 & 578.0 & 0.379 & 14596 & 1964 \\ \cline{3-8}
 & & \multirow{2}{*}{50} & 5 & 70.66 & 0.085 & 1186 & 161 \\ \cline{4-8}
 & & & 7 & 906.7 & 0.342 & 14596 & 1964 \\ \hline
\end{tabular}
\label{tab:res_simulation_mdp}
\end{table}

\section{Conclusions and Future Work}
\label{sec:conclusion}

We developed a planning methodology for optimizing MITL temporal robustness in scenarios where robot navigation times are uncertain. 
For real-world applications, the algorithm considering the MDP model finds its most beneficial use within relatively short planning horizons. This approach is particularly suited for situations where frequent replanning with fewer task demands is necessary.
While our study primarily focused on a single robot navigating an office-like environment with multiple tasks and uncertain navigation times, we believe our method has broader applicability. In future, we aim to extend it to scenarios involving robots operating in dynamic environments, such as navigation with varying levels of crowds, or subject to travel delays induced by human activity \cite{alami2006toward}. We aim to learn a spatio-temporal model of human occupation from temporal sequences \cite{patel2022proactive}. We would use such an activity distribution model, as well as people's typical schedules for daily activities, as input to our planner. In these cases, the stochastic nature of navigation durations becomes crucial, as the robot must adapt to unpredictable delays during task execution. Additionally, we aim to extend our strategy synthesis to scenarios involving multiple robots.
Finally, we believe that temporal robustness is a good measure of a strategy's resilience in accommodating time shifts while still satisfying temporal task requirements. It provides a quantitative measure of adaptability, particularly important in addressing navigational challenges where uncertain navigation times can affect a robot's ability to meet task deadlines and efficiently prioritize tasks. Furthermore, the optimization function can be customized to linear functions, depending on specific optimization objectives and applications.

\section*{Acknowledgement}
We would like to thank Patric Jensfelt for his support and valuable discussions.

\newpage

\bibliographystyle{IEEEtran}
\bibliography{biblio}

\end{document}